\documentclass{article}

\usepackage[preprint]{neurips_2026}

\usepackage[utf8]{inputenc} 
\usepackage[T1]{fontenc}    
\usepackage{hyperref}       
\usepackage{url}            
\usepackage{booktabs}       
\usepackage{amsfonts}       
\usepackage{nicefrac}       
\usepackage{microtype}      
\usepackage{xcolor}         
\usepackage{amsthm}

\usepackage{amssymb}
\usepackage{amsmath}
\usepackage{bm}
\usepackage[capitalise]{cleveref}
\usepackage{graphicx}

\newcommand{\Real}{\mathbb{R}}
\newcommand{\boldu}{\boldsymbol{u}}
\newcommand{\boldtheta}{\boldsymbol{\theta}}
\newcommand{\boldomega}{\boldsymbol{\omega}}
\newcommand{\mmd}{\text{MMD}^2}
\newtheorem{theorem}{Theorem}
\newtheorem*{theorem*}{Theorem}

\title{Watch your neighbors: Training statistically accurate chaotic systems with local phase space information}

\author{%
  Joon-Hyuk Ko \\
  Center for AI and Natural Sciences \\
  Korea Institute for Advanced Study \\
  Seoul, Korea 02455 \\
  \texttt{jhko725@kias.re.kr} \\
  \And
  Andrus Giraldo \\
  School of Computational Sciences \\
  Korea Institute for Advanced Study \\
  Seoul, Korea 02455 \\
  \texttt{agiraldo@kias.re.kr} \\
  \AND
  Deok-Sun Lee \\
  School of Computational Sciences \\
  Korea Institute for Advanced Study \\
  Seoul, South Korea 02455 \\
  \texttt{deoksunlee@kias.re.kr} \\
}

\begin{document}

\maketitle

\begin{abstract}
  Chaotic systems pose fundamental challenges for data-driven dynamics discovery, as small modeling errors lead to exponentially growing trajectory discrepancies. Since exact long-term prediction is unattainable, it is natural to ask what a good surrogate model for chaotic dynamics is. Prior work has largely focused either on reproducing the Jacobian of the underlying dynamics, which governs local expansion and contraction rates, or on training surrogate models that reproduce the ground-truth dynamics' long-term statistical behavior.  In this work, we propose a new framework that aims to bridge these two paradigms by training surrogate dynamics models with accurate Jacobians and long-term statistical properties. Our method constructs a local covering of a chaotic attractor in phase space and analyzes the expansion and contraction of these coverings under the dynamics. The surrogate model is trained by minimizing the maximum mean discrepancy between the pushforward distributions of the coverings under the surrogate and ground-truth dynamics. Experiments show that our method significantly improves Jacobian accuracy while remaining competitive with state-of-the-art statistically accurate dynamics learning methods. Our code is fully available at \url{https://anonymous.4open.science/r/neighborwatch}.
\end{abstract}

\section{Introduction}

Modeling dynamical systems and generating accurate forecasts are central problems across the sciences. When governing equations are unavailable or intractable, data-driven discovery offers an alternative by training surrogate models from trajectory data. Typically, model parameters are learned by minimizing discrepancies between observed and predicted trajectories, yielding “black-box” models that can match or even outperform approximate theoretical descriptions \citep{rackauckas2021,brunton2022,north2023,yu2024}.

Chaotic systems pose unique challenges for data-driven dynamics as the trajectories of these systems are extremely sensitive to variations in the initial condition and model parameters. As such, unavoidable errors or noise in trajectory measurements or model parameters result in trajectory predictions that err exponentially in time, rendering long-term exact trajectory predictions fundamentally impossible. While one can alternatively optimize models based on their short-term predictions alone, models trained this way 
are myopic, quickly outputting unrealistic or diverging solutions when rolled out into the future \citep{li2022,hess2023,schiff2024,li2025}. This incompatibility between trajectory-based optimization and chaotic dynamics necessitates a reevaluation of what constitutes a good surrogate model for chaos.

Recent works in the machine learning community have focused on training models that are "statistically accurate", meaning that even though the long-time model trajectory predictions cannot match that of the ground truth, expectation values computed along these trajectories must agree \citep{jiang2023,platt2023,schiff2024}. The validity of such methods stems from the fact that many chaotic systems possess physical measures that are invariant with respect to their dynamics and are ergodic inside their chaotic attractor. Therefore, long-time trajectory statistics are invariant, global properties of these chaotic dynamics, serving as well-defined targets for the dynamics discovery task.

An alternate line of thought can be found in the dynamical systems literature, where the goal was to train surrogate models that can accurately reproduce the Jacobian of the ground-truth dynamical equations \citep{abarbanel1992,nychka1992,mccaffrey1992,escot2023}. This was largely motivated by the fact that Jacobians provide information about contracting and repelling dynamics of the phase space, which in turn give rise to the key characteristics of chaos, including the sensitivity to initial trajectory perturbations \citep{gilmore2011}. From an alternative standpoint, the recent work of Eisen et al. \citep{eisen2025} has also highlighted the importance of learning accurate Jacobians for chaotic systems, for its pivotal role in identifying interacting subdomains in complex systems such as the brain.

In this work, we aim to bridge the two schools of thought by developing a training method that yields surrogate models for chaotic systems with accurate Jacobians and long-term statistical properties. 
Our work succeeds that of Park et al. \citep{park2025}, where the connection between conditioning models with Jacobian information and their capability to reproduce accurate long-term statistical behaviors was also explored. However, their method relied on an oracle supplying the value of the ground-truth Jacobians, making it unsuitable for real-world applications. 

\paragraph{Contributions} 
We propose a novel framework to train Neural ODEs on chaotic data that are statistically accurate and properly capture the local expansion and contraction of the chaotic attractor. We achieve this by focusing on how local sets evolve in time and developing a specialized loss function that regularizes training with this information. By explicitly accounting for noise during algorithm development, our experiments show that the method yields greatly improved Jacobians, even for noisy data, while retaining statistical accuracy. This setting also better reflects realistic applications, where measurements are corrupted by observational noise and the ground-truth dynamics are unavailable, thereby demonstrating the robustness and practical versatility of our approach.
\section{Background}
\label{gen_inst}

We consider dynamical systems of the form
\begin{equation}
    \displaystyle \frac{d \boldu}{d t} = f(\boldu(t)),
    \label{eqn: autonomous dynamics}
\end{equation}
where \(\boldu (t) \in \Real^d\) is the time-dependent state vector, and $f: \Real^d  \rightarrow \Real^d$ is a sufficiently smooth function. \cref{eqn: autonomous dynamics} encompasses a wide variety of systems, including autonomous ordinary differential equations (ODEs), non-autonomous ODEs,\footnote{By introducing an augmented state vector \(\boldsymbol{v}=[t,\boldu]^\intercal\) \citep{strogatz2024}.} and spatial discretization of partial differential equations (PDEs).

Associated with \cref{eqn: autonomous dynamics} is the time-$T$ map $\phi_T: \mathbb{R}^d \rightarrow \mathbb{R}^d$, defined by $\phi_T(\boldu_0)$ as the state of the system obtained by evolving the initial condition $\boldu_0$ for time $T$ \citep{guckenheimer1983}. That is, for any solution $\boldu(t)$ with 
$\boldu(0)=\boldu_0 \in \mathbb{R}^d$ of \cref{eqn: autonomous dynamics}, $\phi_T(\boldu_0)$ is effectively
\begin{equation*}
    \phi_T(\boldu_0):=\boldu(T)=\boldu_0 + \int_{0}^{T} f\left(\boldu(t)\right) dt.
\end{equation*}
Since $f$ is smooth, then $\phi_T$ is a diffeomorphism. Notice also that under this notation, the time-$nT$ map, for $n\in\mathbb{N}$ satisfies \(\phi_{nT}=\phi_T \circ ...\circ \phi_T =\phi^n_T;\)
that is, it is $n$-fold composition of the time-$T$ map.

We are primarily interested in dissipative systems with a compact attracting invariant set $\mathcal{A} \subset \mathbb{R}^d$. Within this class, we focus on chaotic attractors, for which the dynamics on $\mathcal{A}$ exhibit sensitivity to initial conditions and aperiodic behavior; see \cite{alligood1996} for a more formal introduction. 

In particular, we consider chaotic attractors that support a physical measure $\mu$ \citep{Young2002}.  Any time $T$-map is measurable under $\mu$, and $\mu$ is invariant under any time-$T$ map, i.e., 
\begin{equation*}
\mu(\phi^{-1}_T(B))=\mu(B), \quad \text{for all } T>0 \text{ and all Borel sets } B\subset \mathbb{R}^d.
\end{equation*}
Furthermore, there exists a positive Lebesgue measure set $V \subset \mathbb{R}^d$ such that for any continuous function $g:\mathbb{R}^d\rightarrow \mathbb{R}$,
\begin{equation*}
\lim_{n\rightarrow \infty} \frac{1}{n}\sum^{n-1}_{i=0} g(\phi^{i}_T(x)) = \int_{\mathcal{A}} g d\mu
\end{equation*}
for every $x\in V.$ Since the support of $\mu$ is $\mathcal{A}$, it follows that $\mu(\mathcal{A}) = 1$ and $\mu(\mathbb{R}^d \setminus \mathcal{A}) = 0$. Associated with $\mu$ and the map $\phi_T$ is the pushforward measure $(\phi_T)_\# \mu$, defined on the Borel $\sigma$-algebra of $\mathbb{R}^d$ by
\begin{equation*}
(\phi_T)_\# \mu (B) = \mu\big(\phi_T^{-1}(B)\big),
\end{equation*}
for all Borel sets $B \subset \mathbb{R}^d$. The pushforward $(\phi_T)_\# \mu$ is again a measure. Moreover, since $\mu$ is invariant under $\phi_T$, it follows that $(\phi_T)_\# \mu = \mu$.
Consequently, for every $n \in \mathbb{N}$, we have $(\phi_{nT})_\# \mu = (\phi_T^n)_\# \mu = \mu$; that is, the $\mu$ is invariant under all iterates of the time-$T$ map.

\section{Learning accurate surrogate dynamics models for chaos}

In this work, we focus on the problem of training surrogate dynamics models for chaotic systems using trajectory measurements on the chaotic attractor. One powerful framework for learning continuous-time dynamical systems is the neural ordinary differential equation (Neural ODE) \citep{chen2018,kidger2022}, where a neural network \(f_{\boldtheta}:\Real^d\rightarrow\Real^d\) with learnable parameters \(\boldtheta\in\Real^p\) is used to model the dynamics as,
\begin{equation}
    \displaystyle \frac{d \tilde{\boldu}}{d t} = f_{\boldtheta}(\tilde{\boldu}).
    \label{eqn: neural ode}
\end{equation}
Neural ODEs are particularly suited for learning continuous time dynamics from data because not only do they satisfy the differential equation structure of the ground truth dynamics a priori, they can also be augmented with additional prior knowledge, such as conserved quantities \citep{greydanus2019,cranmer2020} or partially known ground truth equations \citep{rackauckas2021}.

Given a training dataset consisting of \(n\) trajectories sampled at discrete measurement times \(\{t_j\}_{j=0}^{m}\), such that  \( \mathcal{D}_{train}=\{\{\boldu_j^{(i)}\}_{j=0}^m\}_{i=1}^n\), Neural ODE training is typically formulated as solving the following unconstrained optimization problem:
\begin{equation}
    \min_{\boldtheta} \mathcal{L}_{traj}(\boldtheta);\quad \mathcal{L}_{traj}(\boldtheta) = \frac{1}{S\cdot|\mathcal{D}_{train}|}\sum_{(i,j)\in \mathcal{D}_{train}}\sum_{s=1}^S||\phi_{s\Delta t;\boldtheta}(\boldu_j^i)-\boldu_{j+s}^i
   ||^2_2,
    \label{eqn: trajectory loss}
\end{equation}
where \(\phi_{T;\boldtheta}\) is the time-\(T\) map of the Neural ODE, calculated by integrating \cref{eqn: neural ode} with a differentiable numerical ODE solver. Notice that $\phi_{\Delta t}(\boldu_j^i)=\boldu^i_{j+1}$.

In practice, \(\mathcal{L}_{traj}(\boldtheta)\) is optimized using mini-batch gradient descent \citep{goodfellow2016} --in each training step, length \(S\) trajectory segments are randomly sampled from \(\mathcal{D}_{train}\) to produce a batch \(\mathcal{B}\), with which the loss is calculated and optimized for.

Statistically accurate learning methods for chaotic dynamics attempt to regularize the unstable learning problem of \cref{eqn: trajectory loss} by solving the constrained problem 
\begin{equation}
    \min_{\boldtheta} \mathcal{L}_{traj}(\boldtheta) \quad s.t.\quad \mu_{\boldtheta}=\mu,
    \label{eqn: statisticaly accurate objective}
\end{equation}
where \(\mu_{\boldtheta}\) is the physical measure of the Neural ODE dynamics. As it is difficult to strictly enforce the constraint during optimization, \cref{eqn: statisticaly accurate objective} is often relaxed into the following Lagrange multipliers form:
\begin{equation}
    \min_{\boldtheta} \mathcal{L}(\boldtheta); \quad \mathcal{L}(\boldtheta)=\mathcal{L}_{traj}(\boldtheta) + \lambda \cdot \ell(\mu_{\boldtheta},\mu),
    \label{eqn: dyslim-like loss}
\end{equation}
where $\ell$ is a suitable pairwise loss function that compares both measures. In practice, directly computing such losses is often intractable, and one instead compares observables or statistics induced by the measures. Examples include Lyapunov exponents \citep{platt2023}, domain-knowledge-based collections of summary statistics \citep{jiang2023}, or distances between empirical trajectory distributions in phase space, such as the maximum mean discrepancy (MMD) \citep{schiff2024}.

\section{Neighborhood-based training of neural ODEs}

Our goal is to modify and extend the statistically accurate learning framework, so that the resulting Neural ODEs also accurately capture the local expansion and contraction of the chaotic attractor. Indeed, while the objective function in \cref{eqn: dyslim-like loss} enforces agreement between invariant measures, local features of how the flow transports and deforms sets might be lost when implementing \cref{eqn: dyslim-like loss}.

To achieve our goal, we shift from a global to a local perspective by studying how the pushforward measure distributes over the images of sets forming a cover of the attractor under the dynamics. That is, we aim to solve the following constrained optimization problem instead:
\begin{equation}
    \min_{\boldtheta} \mathcal{L}_{traj}(\boldtheta) \quad s.t.\quad (\phi_{\Delta t;\boldtheta})_\#\left.\mu\right|_{\phi_{\Delta t}(U_i)}=(\phi_{\Delta t})_\#\left.\mu\right|_{\phi_{\Delta t}(U_i)},\ \forall i,
    \label{eqn: local objective}
\end{equation}
where $\{U_i\}_{i=1}^{P}$ is a finite cover of the chaotic attractor $\mathcal{A}$ by Borel measurable sets, and
\[
\left. (\phi_{\Delta t})_\#\mu \right|_{\phi_{\Delta t}(U_i)}
\quad \text{and} \quad
\left. (\phi_{\Delta t;\boldtheta})_\#\mu \right|_{\phi_{\Delta t}(U_i)}
\]
denote the restrictions of the corresponding pushforward measures to the set $\phi_{\Delta t}(U_i)$. Note that $\{\phi_{\Delta t}(U_i)\}_{i=1}^{P}$ is also a finite cover of $\mathcal{A}$, since $\phi_{\Delta t}$ is a diffeomorphism and $\mathcal{A}$ is \(\phi_{\Delta t}\)-invariant.

The constraint proposed in \cref{eqn: local objective} requires the Neural ODE to reproduce how the ground truth dynamics transports the physical measure $\mu$, when restricted to each set (i.e. neighborhood) of the cover, over time intervals of length $\Delta t$.

At the same time, our constraint also guides Neural ODEs to learn the global physical measure of the data, which follows from the theorem below (see proof in \cref{appendix: theorem proof}):
\begin{theorem}\label{theo:1}
Let $(X,\mathcal{C})$ be a measurable space, and let $\nu_1,\nu_2$ be finite measures on $(X,\mathcal{C})$ such that $ \nu_1(X\setminus \mathcal{A})=\nu_2(X\setminus \mathcal{A})=0 $
for a set $\mathcal{A}\in\mathcal{C}$ (both measures have support on $\mathcal{A}$). Furthermore, let $\{B_j\}_{j\in\mathbb{N}}\subset\mathcal{C}$ be a countable cover of $\mathcal{A}$, and assume that $\nu_1|_{B_j}=\nu_2|_{B_j}$, $\forall j$; that is
\[
\nu_1(C)=\nu_2(C)\quad\text{for all measurable }C\subset B_j,\;\forall j.
\]
Then $\nu_1=\nu_2$ on $\mathcal{C}$.
\end{theorem}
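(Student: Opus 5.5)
The plan is to disjointify the cover and then use countable additivity. Set
\[
D_1 = B_1 \cap \mathcal{A}, \qquad D_j = (B_j \cap \mathcal{A}) \setminus \bigcup_{k<j} B_k \quad (j\ge 2).
\]
Each $D_j$ lies in $\mathcal{C}$, because $\mathcal{C}$ is a $\sigma$-algebra and $\mathcal{A}, B_k\in\mathcal{C}$. The sets $D_j$ are pairwise disjoint, $D_j\subset B_j$, and $\bigcup_j D_j = \mathcal{A}\cap\bigcup_j B_j = \mathcal{A}$ since the $B_j$ cover $\mathcal{A}$.

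Now fix an arbitrary $C\in\mathcal{C}$ and split it as $C = (C\setminus\mathcal{A}) \cup \bigcup_j (C\cap D_j)$, which is a countable disjoint union of measurable sets. For $i=1,2$, the piece $C\setminus\mathcal{A}\subset X\setminus\mathcal{A}$ has $\nu_i$-measure zero by monotonicity and the support hypothesis. Each $C\cap D_j$ is a measurable subset of $B_j$, so the hypothesis $\nu_1|_{B_j}=\nu_2|_{B_j}$ gives $\nu_1(C\cap D_j)=\nu_2(C\cap D_j)$. Countable additivity then yields
\[
\nu_1(C) = 0 + \sum_j \nu_1(C\cap D_j) = \sum_j \nu_2(C\cap D_j) + 0 = \nu_2(C).
\]
Since $C$ was arbitrary, $\nu_1=\nu_2$ on $\mathcal{C}$.

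There is no real obstacle here. The one point to check carefully is that the disjointified pieces are genuinely measurable subsets of the individual $B_j$, so that the restriction hypothesis applies to them; the construction of $D_j$ ensures this. Finiteness of the measures is not needed, because all the series involved have nonnegative terms. The finite covers $\{\phi_{\Delta t}(U_i)\}$ used in \cref{eqn: local objective} are a special case, obtained by taking $B_j=\emptyset$ for $j>P$.
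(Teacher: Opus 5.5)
Your proof is correct and follows essentially the same approach as the paper: disjointify the cover intersected with $\mathcal{A}$ into pieces inside each $B_j$, then apply countable additivity. The only cosmetic difference is that the paper first reduces to $A\cap\mathcal{A}$ via $\nu_k(A)=\nu_k(A\cap\mathcal{A})$, whereas you keep $C\setminus\mathcal{A}$ as a separate null piece of the disjoint union.
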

For our purposes, $\nu_1=(\phi_{\Delta t})_\#\mu, \, \nu_2=(\phi_{\Delta t;\boldtheta})_\#\mu$ and $\{B_j\}_{j\in\mathbb{N}}=\{\phi_{\Delta t}(U_j)\}_{j\in\mathbb{N}}$.
Since $\mu$ is invariant under the ground-truth dynamics, the above theorem implies that \((\phi_{\Delta t;\boldtheta})_\# \mu =(\phi_{\Delta t})_\#\ \mu  = \mu\). Consequently, the Neural ODE also preserves the physical measure under iterates of the time-$\Delta t$ map. For sufficiently small $\Delta t$, this suggests that the Neural ODE dynamics approximately preserve $\mu$ over continuous time intervals as well.

To transform \cref{eqn: local objective} into a concrete algorithm for Neural ODE training, we need three ingredients: 
\begin{enumerate}
    \item A strategy for constructing a finite cover $\{U_i\}_{i=1}^P$ (and their image  $\{\phi_{\Delta t}(U_i)\}_{i=1}^P$) of the attractor and estimating the pushforward measures generated by the ground-truth dynamics on the images of the cover sets: $\left. (\phi_{\Delta t})_\# \mu \right|_{\phi_{\Delta t}(U_i)}.$ 
    \item  A procedure for evolving the cover sets under the Neural ODE dynamics and estimating the corresponding pushforward measures, when restricted to the cover $\{\phi_{\Delta t;\boldtheta}(U_i)\}_{i=1}^P\}$:$\left. (\phi_{\Delta t;\boldtheta})_\# \mu \right|_{\phi_{\Delta t}(U_i)}.$
    \item A pairwise loss function quantifying the discrepancy between the Neural ODE and ground-truth pushforward measures on each evolved neighborhood: $\ell\!\left(
    \left. (\phi_{\Delta t;\boldtheta})_\# \mu \right|_{\phi_{\Delta t;\boldtheta}(U_i)},
    \left. (\phi_{\Delta t})_\# \mu \right|_{\phi_{\Delta t}(U_i)}
    \right)$.
\end{enumerate}
Notice that our construction compares two different evolved covers, $
\{\phi_{\Delta t}(U_i)\}_{i=1}^P$ and $\{\phi_{\Delta t;\boldtheta}(U_i)\}_{i=1}^P,$ together with their corresponding pushforward measures and restrictions. At first glance, this appears inconsistent with Theorem~\ref{theo:1}, since it assumes a common cover. However, the loss function $\ell$ is chosen so that
\[
\ell = 0
\quad \Longrightarrow \quad
\phi_{\Delta t}(U_i)
=
\phi_{\Delta t;\boldtheta}(U_i),
\qquad
i=1,\dots,P.
\]
Consequently, whenever the loss vanishes (or is sufficiently small), the two evolved covers coincide (or are close), allowing the corresponding restricted pushforward measures to be compared on the same sets. We detail each ingredient below.

\begin{figure}[tp]
    \centering
    \includegraphics[width=0.9\linewidth]{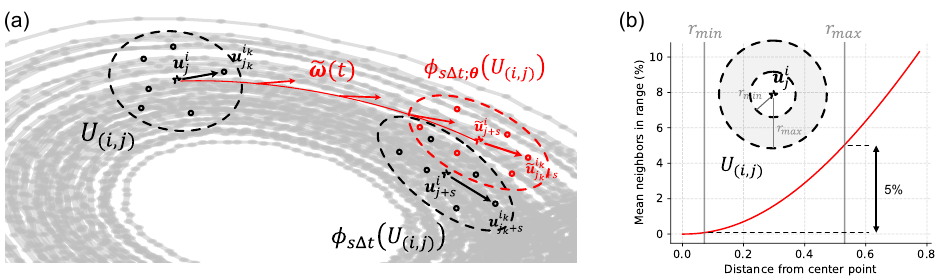}
    \caption{Schematic of our Neighborhood method. \textbf{(a)} We focus on finite cover of the attractor \(U_{(i,j)}\) and regularize training so that the model deforms the cover in the same manner as the ground truth dynamics. \textbf{(b)} We design our cover sets to be annuli to mitigate the effects of noise.}
    \label{fig: neighborhood schematic}
\end{figure}

\paragraph{Estimating the time-evolution of attractor covers from data} 
Since our dataset consists of discrete trajectory measurements on the chaotic attractor, we construct a cover $\{U_{(i,j)}\}_{i=1,j=0}^{n,m}$ by considering neighborhoods centered at the data points $\boldu_j^i$. Assuming the trajectories are sampled from the asymptotic dynamics on the attractor, the neighboring points contained in a set $U_{(i,j)}$ can be regarded as samples from the physical measure restricted to that neighborhood:$\{\boldu_{j_k}^{i_k}\} \sim \left. \mu \right|_{U_{(i,j)}},$
where $(i_k,j_k)$ denote the trajectory and time indices of the $k$-th neighboring point in $U_{(i,j)}$.

The evolution of this local neighborhood can then be approximated by evolving each neighboring point forward in time. Consequently, $
\{\boldu_{j_k+s}^{i_k}\}
\sim
(\phi_{s\Delta t})_\#
\left. \mu \right|_{\phi_{s\Delta t}(U_{(i,j)})}$ is a sample of the pushforward measure when restricted to the evolved neighborhood.

While a simple choice for the neighborhood around \(\boldu_j^i\) would be to consider an open ball of radius \(r_{max}\), for real-world applications, noise imposes an additional constraint by obscuring the time evolutions of neighbors that are too close to the center point. To counter this, we modify the neighborhood to be an annulus with inner and outer radii of \(r_{min}\) and \(r_{max}\), where \(r_{min}\) is chosen as a constant multiple\footnote{We used 8 in our experiments.} of the estimated standard deviation of the data noise. For \(r_{max}\), we find that each neighborhood needs to have some fraction of the total number of points in it; otherwise, the learned local information tends to overfit on the few neighbors watched during training. Therefore, we select \(r_{max}\) so that on average, each neighborhood contains 5\% of the total points in the dataset; see \cref{fig: neighborhood schematic}b. 

We comment that the phase space neighbors can be identified efficiently using KDTrees \citep{maneewongvatana1999}. Furthermore, as neighbor identification is performed once per dataset, prior to training, this procedure has negligible impact on the actual model training and evaluation.

\paragraph{Evolving phase space neighborhoods with Taylor expansion of the Neural ODE}
Instead of directly evolving the neighboring points $\phi_{\Delta t;\boldtheta}(\boldu_{j_k}^{i_k})$, we evolve perturbations relative to the center point $\boldu_j^i$ of the neighborhood and reconstruct the evolution of the neighborhood from these perturbations; see \cref{fig: neighborhood schematic}a. In this way, we aim to capture how small perturbations are propagated by the Neural ODE dynamics. Moreover, by constraining the evolution of perturbations relative to a common center trajectory, the method indirectly regularizes the local geometry of the learned flow, enforcing coherent local deformation of nearby points and preventing spurious local distortions in the vector field.

More precisely, the evolution of phase-space neighborhoods under the Neural ODE dynamics can be approximated through the Taylor expansion
\begin{equation}
    \frac{d}{d t}\begin{bmatrix}\tilde{\boldu}\\\tilde{\boldomega}^k\end{bmatrix}=\begin{bmatrix}
        f_{\boldtheta}(\tilde{\boldu})\\ \left. \partial f_{\boldtheta}\right|_{\tilde{\boldu}}[\tilde{\boldomega}^k]+
        \left. \frac{1}{2}\partial^2 f_{\boldtheta}\right|_{\tilde{\boldu}}[\tilde{\boldomega}^k, \tilde{\boldomega}^k]
    \end{bmatrix}, \quad \begin{bmatrix}
        \tilde{\boldu}(0)=\boldu_j^i\\
         \tilde{\boldomega}^k(0)=\boldu_{j_k}^{i_k}-\boldu_j^i,
    \end{bmatrix}
    \label{eqn: neighborhood neural ode}
\end{equation}
where \(\tilde{\boldomega}^k\) is the time-evolving perturbation vector between the center \(\boldu_j^i\) and its \(k\)-th neighbor \(\boldu_{j_k}^{i_k}\) and the second order term accounts for the fact that these perturbations are not infinitesimal. 

From this, samples from the Neural ODE-evolved cover \(U_{(i,j)}\) can be obtained as: \(\{\tilde{\boldu}_{j_k +s}^{i_k}\} \sim (\phi_{s\Delta t;\boldtheta})_{\#}\left.\mu\right|_{\phi_{s\Delta t;\boldtheta}(U_{(i,j)})}\), where \(\tilde{\boldu}_{j_k +s}^{i_k}=\tilde{\boldu}_j^i+\tilde{\boldomega}^k\).

\paragraph{Matching neighborhood predictions with the maximum mean discrepancy}

To measure the discrepancy between the Neural ODE and ground truth versions of the time-evolved local neighborhoods using point samples, we use the squared maximum mean discrepancy (MMD\textsuperscript{2}) \citep{gretton2012}:
\begin{equation}
\begin{aligned}
&\ell\left((\phi_{s\Delta t;\boldtheta})_\#\left.\mu\right|_{\phi_{\Delta t;\boldtheta}(U_{(i,j)})},(\phi_{s\Delta t})_\#\left.\mu\right|_{\phi_{\Delta t}(U_{(i,j)})}\right)=\widehat{\text{MMD\textsuperscript{2}}}(\{\boldu_{j_k +s}^{i_k}\}, \{\tilde{\boldu}_{j_k +s}^{i_k}\})\\&=\frac{1}{K(K-1)}\sum_{k=1}^K\sum_{k'\neq k}^K\left[ \kappa\left(\tilde{\boldu}_{j_k}^{i_k},\tilde{\boldu}_{j_{k'}}^{i_{k'}}\right)+\kappa\left(\boldu_{j_k}^{i_k},\boldu_{j_{k'}}^{i_{k'}}\right)\right]-\frac{2}{K^2}\sum_{k=1}^K\sum_{k'=1}^K \kappa\left(\tilde{\boldu}_{j_k}^{i_k},\boldu_{j_{k'}}^{i_{k'}}\right)
\end{aligned}
\label{eqn: mmd}
\end{equation}
where $\kappa$ is a suitable \textbf{characteristic} kernel function (see \cref{appendix: evaluation metrics} for further details). In particular, because $\kappa$ is characteristic, the condition $\widehat{\mathrm{MMD}^2}\approx 0$ implies that the corresponding pushforward measures are approximately equal. Moreover, since the loss is computed from samples supported on the evolved neighborhoods, this also enforces approximate agreement between the corresponding neighborhood images. Consequently, the setting of Theorem~\ref{theo:1} is approximately recovered.

With \cref{eqn: mmd}, we can formulate our loss function, corresponding to a relaxed version of \cref{eqn: local objective}:
\begin{equation}
    \mathcal{L}(\boldtheta)=\mathcal{L}_{traj}(\boldtheta)+\lambda \cdot\frac{1}{S\cdot|\mathcal{D}_{train}|}\sum_{(i,j)\in \mathcal{D}_{train}}\sum_{s=1}^S\ell\left((\phi_{s\Delta t;\boldtheta})_\#\left.\mu\right|_{U_{(i,j)}},(\phi_{s\Delta t})_\#\left.\mu\right|_{U_{(i,j)}}\right),
    \label{eqn: neighborhood loss function}
\end{equation}
where \(\lambda\) is a hyperparameter that balances the relative influence of the two losses. 

Analogously to \cref{eqn: trajectory loss}, our loss function is also minimized using mini-batch gradient descent. For this, we set a fixed batch size \(|\mathcal{B}|\), and sample \(B'\) center points and \(K\) neighbor points per cover set such that \(|\mathcal{B}|=B'(K+1)\). We set the number of neighbors \(K\) as a hyperparameter of our method, along with the neighborhood loss weight \(\lambda\).

\section{Experiments}
\label{others}

\subsection{Datasets}
We performed experiments on three different chaotic systems with varying dimensionalities: the 3D Lorenz 63 system \citep{lorenz1963}, the 4D Hyperchaotic Chen system \citep{li2005}, and the 6D Lorenz 96 system \citep{lorenz1996}. To account for the different time scales across the systems considered, the sampling time interval was set to be \(0.01\tau\), where the characteristic timescale \(\tau\) of each system was estimated from the averaged Fourier spectrum of long-term trajectories. For each system, the training, validation, and test data all consisted of 50 trajectories with 1000 time points each, which were scaled and shifted to have zero mean and unit standard deviation along each dimension. Subsequently, the training and validation data were corrupted using Gaussian noise with standard deviations of 0.01, 0.05 or 0.1, corresponding to 1\%, 5\%, or 10\% noise strength as computed by the ratio between noise and data standard deviations. For additional details on data generation, we refer readers to \cref{appendix: data generation}.

\subsection{Baselines}
To test the efficacy of our proposed \textbf{Neighborhood} method, we perform experiments against the following two baselines: (i) \textbf{Vanilla}: which corresponds to the standard trajectory-based Neural ODE training (\cref{eqn: trajectory loss}), and (ii) \textbf{DySLIM} \citep{schiff2024}: a method designed to improve the long-term statistical accuracy of Neural ODEs by adding two loss terms that measure the discrepancy between the invariant measures of the data and the model in a global manner. Hyperparameters of DySLIM and our Neighborhood method\footnote{Both methods have two hyperparameters each.} were determined via a small grid search prior to the main experiments, which we detail in \cref{appendix: hyperparameter selection}.

\subsection{Model training and evaluation}
For all experiments, we use multi-layer perceptrons for the Neural ODE vector field \(f_{\boldtheta}\), with nodes per layer being [\(d\), 64, 64 \(d\)]. The time-\(\Delta t\) maps of the Neural ODEs were approximated using the 5-th order adaptive Runge-Kutta solver \texttt{Tsit5} \citep{tsitouras2011} from the \texttt{diffrax} library
\citep{kidger2022}, with relative and absolute tolerance values of 1e-4 and 1e-6. We use ten-step model rollouts and train using the adabelief optimizer \citep{zhuang2020}, with a batch size of 2048 and a learning rate value of 2e-3. The number of training steps was set to 10000 for the Lorenz96 system, and 5000 for the rest; moreover, the model with the lowest validation loss at each training step was kept for subsequent evaluations. All computations --including dataset generation, model training, and evaluation-- were performed on an internal cluster equipped with NVIDIA V100 GPUs with 32GB VRAM. Each experiment was repeated three times, with different random seeds for model weight initialization. We report the associated means and standard deviations in our results whenever applicable.

\section{Results}
\subsection{Short-term trajectory predictions}
As only long-term trajectories are impossible to predict exactly, properly trained chaotic models should still be able to generate accurate short-term forecasts. Therefore, we first check the short-term prediction accuracy of the trained models using the valid prediction time (VPT), which is the time (scaled by the maximum Lyapunov exponent \(\lambda_1\)) elapsed until the normalized root mean squared error (NRMSE) between the data and the model trajectory exceeds a prespecified threshold value (\cref{fig: short-term prediction}a, \cref{appendix: evaluation metrics}).

\begin{figure}[ht]
    \centering
    \includegraphics[width=1.0\linewidth]{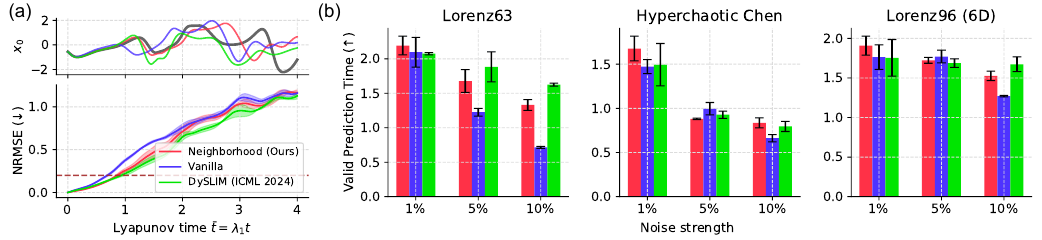}
    \caption{Short-term accuracy of the trained models. \textbf{(a)} Data  (gray line) and model predictions (trained on 10\% noise data) for the first state variable of the 6D Lorenz 96 system (upper panel) 
    and the corresponding NRMSE as a function of Lyapunov time (lower panel). 
    Dashed brown line marks the VPT threshold value (0.3) used in (b). \textbf{(b)} Valid prediction time (VPT) across different systems and training data noise strengths.}
    \label{fig: short-term prediction}
\end{figure}

From the results in \cref{fig: short-term prediction}b, we see that for lower noise strengths, conventional trajectory-based training (Vanilla) performs competitively, indicating that for short-term predictions, regularizing the MSE objective is not necessary if the data is not too degraded. 
As noise increases, all three training methods result in lower short-term accuracies, as the dynamics information in the training data becomes increasingly blurred. However, the relative performance drop between the methods differs, with the VPTs of models trained with the Vanilla method quickly dropping behind models trained with DySLIM and our Neighborhood method. This indicates that our neighborhood loss term (as well as the two additional loss terms of DySLIM) properly regularize the learning problem, preventing the model from overfitting on the noisy short-term trajectory data.

\subsection{Long-term statistical accuracy}
Next, we evaluate the long-term statistical accuracy of the differently trained models by quantifying how well the learned dynamics preserve the chaotic attractor over long periods of time. To do so, we take a sample of 5000 points on the data attractor and evolve them in time with both the ground truth equations and the learned Neural ODEs. The resulting long-time evolution of the sampled attractors is then compared visually (\cref{fig: statistical accuracy}a) and quantitatively, using the squared maximum mean discrepancy (\(\mmd\); \cref{eqn: mmd}) (\cref{fig: statistical accuracy}b, c).

\begin{figure}[ht]
    \centering
    \includegraphics[width=1.0\linewidth]{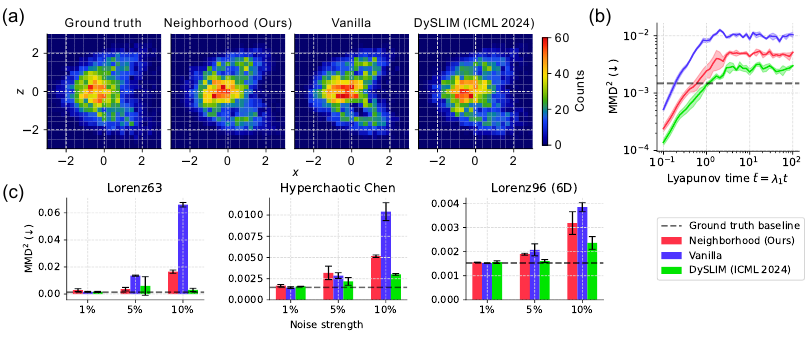}
    \caption{Long-term statistical accuracy of the trained models. \textbf{(a)} 2D histograms of the time-evolved samples of the hyperchaotic Chen system, at Lyapunov time \(\bar{t}=100\). The models depicted were trained on the 10\% noise data. \textbf{(b)} The corresponding \(\mmd\) as a function of rollout time. The gray dashed line indicates the \(\mmd\) between two independent samples of the same data attractor (see \cref{appendix: evaluation metrics} for details). \textbf{(c)} \(\mmd\) between the ground truth and model-evolved samples at \(\bar{t}=100\) for different systems and noise strengths.} 
    \label{fig: statistical accuracy}
\end{figure}

As shown in \cref{fig: statistical accuracy}b, the \(\mmd\) starts from zero and rises with time, as the ground
truth and model dynamics evolved points start from the same initial conditions and become increasingly decorrelated. After sufficient time, the \(\mmd\) plateaus to some fixed value; as the model evolves, points settle onto the model invariant measure, which may differ from that of the data (\cref{fig: statistical accuracy}a).

The results in \cref{fig: statistical accuracy}c paint a similar picture as in the short-term accuracy case (\cref{fig: short-term prediction}b), where all methods, including Vanilla, are able to learn the ground truth attractor when only a small amount of noise is present in the data. For larger noise scenarios, however, the invariant measure discovered by Vanilla becomes increasingly inaccurate, whereas both our Neighborhood and DySLIM methods successfully mitigate the effect of noise and produce more accurate invariant measures. 


While our method performs competitively with DySLIM in short-term prediction, DySLIM achieves slightly higher long-term statistical accuracy, likely due to its direct optimization of physical measure agreement in the long term. In contrast, our method matches local covers and their pushforward measures, thereby indirectly capturing both statistical and local dynamical properties simultaneously. As shown next, this tradeoff yields substantially improved local dynamical behavior while maintaining competitive statistical performance.

\subsection{Local behavior of the learned dynamics}
\label{subsection: local behavior}
Finally, we address the study's key motivation and assess how well the trained models capture the local behavior of the ground-truth dynamics. We evaluate the models on two fronts: (i) the relative vector field error \(\nicefrac{\|f_{\boldtheta}-f\|_2}{|f\|_2}\), which gauges how well the trained models reproduce the ground truth velocities, and (ii) the relative Jacobian error \(\nicefrac{\|\partial f_{\boldtheta}-\partial f\|_F}{|\partial f\|_F}\) as it is the Jacobian of the dynamics that govern the local deformation of the chaotic attractor via the variational equation.

\begin{figure}[ht]
    \centering
    \includegraphics[width=1.0\linewidth]{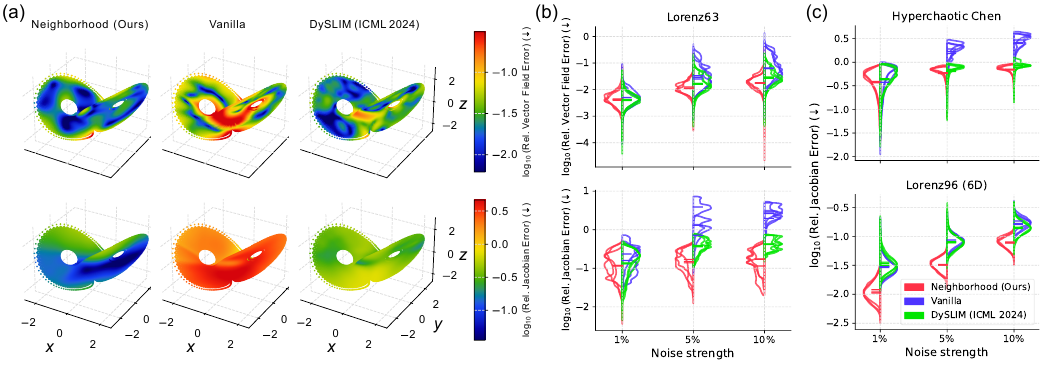}
    \caption{Local behavior of the trained models. \textbf{(a)} Relative error in the vector field (upper row) and the Jacobians (lower row) for models trained on the 3D Lorenz 63 system with 10\% noise. \textbf{(b)} Violin plots of the relative vector field and Jacobian errors for the Lorenz 63 system, across all noise strengths. \textbf{(c)} Relative Jacobian error with varying noise strengths for different systems.}
    \label{fig: local behavior}
\end{figure}

\cref{fig: local behavior}a visualizes these two metrics on the attractor of the Lorenz63 system, from which we immediately see the merits of "watching the neighbors" during training. In particular, we find that while the Vanilla method learns a moderately accurate vector field over most of the attractor, the corresponding Jacobians are completely erroneous, highlighting the fact that learning to reproduce the values of a target function does not guarantee accurate estimation of its derivatives \citep{he2019,latremoliere2022,eisen2025}. Although DySLIM produces more accurate vector fields and Jacobians than Vanilla, our Neighborhood method consistently outperforms DySLIM across all experimental settings (\cref{fig: local behavior}b, c).


We particularly highlight the vector field predictions for the Lorenz63 system, where both Vanilla and DySLIM fail to accurately reconstruct the dynamics near the origin (\cref{fig: local behavior}a, upper row; red patches at the center of the Vanilla and DySLIM attractors). This region is especially challenging, as it corresponds to the lobe-switching mechanism --the trajectory deciding whether to evolve toward the left or right wing of the attractor-- and is therefore associated with strong local instability and unpredictability \citep{dong2025,ayers2023}. Remarkably, the Neighborhood method largely overcomes this difficulty, producing substantially more accurate and spatially uniform vector field predictions across the entire attractor, including the highly sensitive switching region.

\paragraph{Lyapunov exponents} As a key application of accurate Jacobian learning, we assessed the model predictions of the Lyapunov exponents, which is a key descriptor of chaotic dynamics. Due to the page constraint, we present the results in \cref{appendix: more results}, where we show that our model also yields greatly improved values for these exponents.

\section{Related Works}
\paragraph{Estimating the Jacobian from time series data.}
\label{section: jacobian estimation}
Due to its role in Lyapunov exponent estimation and chaos detection, estimating the Jacobian of the time-\(T\) map \(\phi_{\Delta t}\) from data has been extensively studied in the literature. These approaches can broadly be classified as either global or local \citep{mccaffrey1992,escot2023}.

Global methods estimate the Jacobian by fitting surrogate models --such as polynomials \citep{ataei2003}, radial basis functions \citep{potts1991,mccaffrey1992,holzfuss2006,sun2012}, or shallow neural networks \citep{mccaffrey1992,gencay1992,nychka1992,shintani2004}-- to time-series data and subsequently differentiating the learned model\footnote{In this sense, Jacobian estimation via Vanilla Neural ODE training is itself a global method.}. However, as with Vanilla training, accurate trajectory prediction does not necessarily imply accurate derivative estimation \citep{he2019,latremoliere2022}.

In contrast, local methods \citep{eckmann1986,brown1991,abarbanel1992} estimate Jacobians by tracking the evolution of phase space neighborhoods. Around each point of interest, a local regression model --e.g., linear, quadratic \citep{brown1991,abarbanel1992}, or kernel-based \citep{escot2023}-- is fitted to the evolution of neighbor distances, and the Jacobian is inferred from the model’s linear terms. While these approaches directly estimate the Jacobian, training a separate model at each measurement point is computationally and memory intensive. Our Neighborhood method provides a middle ground between global and local approaches by training a single global model constrained by neighborhood-distance evolution for direct Jacobian estimation.


\section{Conclusion}
In this work, we introduced the Neighborhood training method for Neural ODEs, a noise-robust approach that jointly achieves long-term statistical accuracy and accurate local dynamics. Our method is formulated by analyzing the evolution of local covering sets of the attractor (i.e., phase-space neighborhoods), and experiments --that resemble real-world situations-- show substantially improved local dynamical accuracy while maintaining competitive statistical performance.

Our work lays the theoretical, practical and algorithmic foundation for training Neural ODEs with local phase-space information, while leaving several directions for future research, including extensions to higher-dimensional and partially observed systems. The construction of attractor covering sets could also be further optimized, for example by accounting for variations in local predictability across the attractor \citep{abarbanel1991,bailey1997,ziehmann2000}. We nevertheless hope this work provides a complementary starting point for developing more sophisticated methods for learning chaotic dynamics from data.
\begin{ack}

This work was supported by  KIAS Individual Grants [Nos AP102401(J.-H.K.), CG086102 (A.G.) and CG079902 (D.-S.L.)] at Korea Institute for Advanced Study.


\end{ack}

\bibliography{references}
\bibliographystyle{abbrv}

\newpage
\appendix
\section{Additional results}
\label{appendix: more results}
\paragraph{Lyapunov exponents} One key application of accurate Jacobian learning is the estimation of the Lyapunov exponents from noisy trajectory data. This is because Lyapunov exponents characterize the average contraction and expansion rates on the chaotic attractor, serving as a key descriptor of chaotic systems. However, accurately inferring these quantities from surrogate models is challenging, as their computation requires both accurate long-term physical measures and local Jacobian information.
\begin{figure}[ht]
    \centering
    \includegraphics[width=1.0\linewidth]{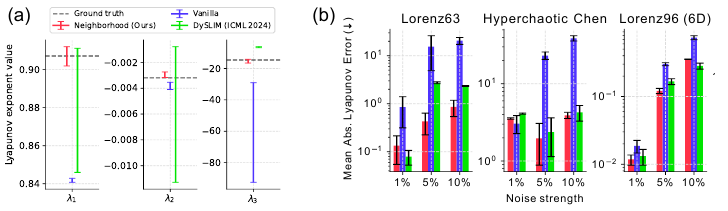}
    \caption{Lyapunov exponents of the trained models. \textbf{(a)} Predictions for the three Lyapunov exponents of the Lorenz63 system, from models trained on 10\% noise data. \textbf{(b)} Mean absolute error of the computed Lyapunov exponents, across systems and noise levels.}
    \label{fig: lyapunovs}
\end{figure}

As shown in \cref{fig: lyapunovs}b, we find that the vanilla method struggles to reproduce the Lyapunov exponents in the presence of larger noise. This is particularly pronounced for the negative exponent(s) (\cref{fig: lyapunovs}a; rightmost panel), which are known to be more difficult to recover from data, as chaotic attractors tend to be very "thin" in directions associated with these exponents \citep{brown1991}. Compared to Vanilla, DySLIM returns more accurate exponents because it estimates the physical measure more accurately. However, our Neighborhood yields the most accurate predictions for the Lyapunov exponents, as it properly captures both ingredients --the global physical measure and the local Jacobians-- that give rise to them.

\section{Data generation}
\label{appendix: data generation}
\subsection{Overview}
For each ODE system considered in this paper, we generated train, validation and test datasets in the following manner:
\begin{enumerate}
    \item Sample \(m=50\) random initial points on the attractor
    \item Estimate the characteristic time scale \(\tau\) of the system, via the average frequency spectrum.
    \item Simulate \(m\) on-attractor trajectories in the time span \([0, 10\tau]\) with \(\Delta t=0.01\tau\)
    \item Shift and scale datasets using the train dataset statistics.
\end{enumerate}

\paragraph{Sampling on-attractor initial points}
To sample points on the chaotic attractor, we first drew \(m\) initial conditions \(\{\boldu_0^{i}\}_{i=1}^m\) from a normal distribution, with the mean and standard deviation chosen so that the sampled points would lie in the basin of attraction of the chaotic attractor. Afterwards, the dynamical equations were numerically integrated up to some prespecified burn-in time \(t_{burn\text{-}in}\) for the trajectories to converge onto the chaotic attractor. 

\paragraph{Characteristic time scale estimation} Using the previously obtained initial conditions \(\{\boldu_0^{i}\}_{i=1}^m\), we generated \(m\) long time trajectories, which were then Fourier transformed along each dimension to yield \(m\) sets of \(d\) frequency spectra. These were then averaged across trajectories to yield a single set of averaged frequency spectra. After a slight smoothing with a 1D Gaussian filter to remove spurious peaks, \texttt{scipy.signal.find\_peaks} function \citep{virtanen2020} was used to identify the most prominent frequency peak for each dimension. Finally, the characteristic time scale \(\tau\) was chosen as the median of the reciprocals of the discovered prominent frequencies per dimension.

\paragraph{Trajectory simulation}
We sample trajectories using the previously found \(m\) random initial conditions on the attractor  \(\{\boldu_0^{i}\}_{i=1}^m\). We normalize the simulation time and the sampling interval using the characteristic time scale \(\tau\) to account for variations in the intrinsic timescales of different ODEs. 

All numerical integration of the ODEs were performed using the \texttt{Tsit5} solver of the \texttt{diffrax} \citep{kidger2022} library, which is an implementation of Tsitouras's 5/4 method \citep{tsitouras2011}. Adaptive time stepping was used to march the solutions forward in time, with relative and absolute tolerance values of 10\textsuperscript{-8} and 10\textsuperscript{-10}, respectively.

\paragraph{Dataset normalization}
It is known that the differing size scales of the ODE dimensions can adversely affect neural ODE training \citep{kim2021}. Therefore, we shifted and scaled the train dataset to have zero mean and unit standard deviation and used the same statistics to normalize the validation and test datasets. 

Note that transforming the data in such a way changes the corresponding ground-truth ODE of a dataset. For an ODE and a bijection 
\begin{equation}
    \frac{d\boldu}{dt} = f(\boldu), \quad \bm{v} = g(\bm{u})
\end{equation}
the transformed state \(\bm{v}\) evolves according to 
\begin{equation}
    \frac{d\bm{v}}{dt}=\partial g|_{g^{-1}(\bm{v})}f(g^{-1}(\bm{v})).
        \label{eqn: shift and scaled ODE}
\end{equation}
This effect was accounted for when computing the relative vector field and Jacobian errors in \cref{subsection: local behavior}.

\subsection{Per-system settings}
Here, we describe the dynamical equations of the systems used, as well as any system-specific parameters for data generation.

\paragraph{Lorenz63 system} The Lorenz63 system \citep{lorenz1963} is given by the following equations:
\begin{equation}
    \frac{d}{dt}\begin{bmatrix}x\\ y\\ z \end{bmatrix} = \begin{bmatrix}
        \sigma (y-x)\\ x(\rho-z)-y\\ xy-\beta z 
    \end{bmatrix}.
\end{equation}
We use the canonical parameter values \((\rho, \sigma, \beta)=(10, \frac{8}{3}, 28)\). The random initial conditions were sampled from the standard normal distribution, and following \cite{froyland1984}, we used a burn-in time of \(t_{burn\text{-}in}=50\).

\paragraph{Hyperchaotic Chen system} The hyperchaotic Chen system was introduced in Li et al. \citep{li2005}, as a modified version of the chaotic Chen system \citep{chen1999}. Hyperchaos means that the system has two or more positive Lyapunov exponents, in contrast to chaotic systems, which only have a single positive exponent. The governing equations are given by
\begin{equation}
    \frac{d}{dt}\begin{bmatrix}x\\ y\\ z \\ w\end{bmatrix} = \begin{bmatrix}
        a(y-x)+w\\ x(d-z)+cy\\xy-bz\\yz+rw 
    \end{bmatrix}.
\end{equation}
We use the parameters \((a, b, c, d, r)=(35, 3, 12, 7, 0.58)\), which are the parameters reported in the original paper, to give rise to hyperchaos. For the random initial conditions, we use samples from \(\mathcal{N}([0, 0, 20, 0]^{\mathsf{T}}, I_{4\times 4})\), which we found allows the trajectories to converge faster onto the attractor. The burn-in time was set to \(t_{burn\text{-}in}=100\).

\paragraph{Lorenz96 system}
The Lorenz96 system \citep{lorenz1996} is system of ODEs defined on a periodic lattice of \(d\) nodes, given by
\begin{equation}
    \frac{dx_i}{dt}=(x_{i+1}-x_{i-2})x_{i-1}-x_i+F, \quad i=0,\dots,d-1.
\end{equation}
For large values of the external force \(F\), this system displays chaotic behavior. In our experiments, we used \(F=10\) and \(d=6\). We drew random initial conditions from the standard normal distribution and used a burn-in time of \(t_{burn\text{-}in}=1000\).

\section{Hyperparameter selection}
\label{appendix: hyperparameter selection}
To choose the hyperparameters for our main experiments, we performed a small grid search using a single random seed. Hyperparameter sweep was performed for each dataset and noise level, and we detail our selection procedure below and present the selection results in the following subsection.

\subsection{Selection criteria overview}

We designed our hyperparameter selection criteria to \textbf{prefer models with long-term statistical accuracy}. This was because hyperparameter combinations with the strongest short-term performances tended to have overfit on the short-term dynamics, especially for experiments with larger noise levels. Alternatively, choosing hyperparameters based on the accuracy of the learned Jacobians also had the problem of unnecessarily skewing the results in our favor, as the baseline DySLIM method was not designed for this task. As both our Neighborhood method and DySLIM were both developed with the statistical accuracy of the trained models in mind, selecting hyperparameters based on long-term statistical accuracy ensured that both training methods had a chance of showcasing their efficacy.

To attribute a score to each hyperparameter combination in the grid search, we computed the Sinkhorn divergence \citep{cuturi2013,genevay2018} --which is an optimal transport-based metric that also gauges the discrepancy between two measures using point samples-- between the model predictions and the ground truth trajectories in the validation dataset. Here, we used the Sinkhorn divergence instead of the \(\mmd\) to avoid optimizing the hyperparameter selection by the exact metric used for subsequent model evaluations.

Sinkhorn divergences were computed using the \texttt{ott-jax} package \citep{cuturi2022} using trajectories from the validation dataset. As the 50 trajectory samples of length \(10\tau\) was too small to produce a reliable estimate of the Sinkhorn divergence, we first cut each trajectory into 10 pieces to produce 500 segments of length \(\tau\). Subsequently, the state values at the start of each trajectory segments were used as initial conditions to the trained Neural ODEs to produce 500 segment predictions. Finally, the Sinkhorn divergence between the data values and model predictions at the end of the segments (corresponding to the 500 point samples of the attractor evolved for time \(\tau\)) was calculated using the default parameter settings of the \texttt{ott-jax} library, and used as the score for a given hyperparameter combination.

We comment that for this Sinkhorn divergence computation, the noiseless version of the validation dataset was used, contrary to model checkpointing where the noisy validation dataset was used. This choice was to decouple the effects of the hyperparameter selection process from the actual algorithm performances. Otherwise, the experimental results would measure the adequacy of the hyperparameter selection criteria in identifying the best performing models for each training method, as opposed to gauging the actual algorithm performances. Indeed, the original DySLIM paper \citep{schiff2024} also selects hyperparameters based on model performances on clean datasets, as the work does not consider noise effects. Nonetheless, a complete deployment of both our Neighborhood method and DySLIM to noisy, real world datasets would require a more effective hyperparameter selection strategy, which we leave for future work.

\subsection{Hyperparameter selection results}

\paragraph{DySLIM} DySLIM has two hyperparameters \(\lambda_1\) and \(\lambda_2\), corresponding to the weights of its two additional loss functions. We used the same grid used in the original paper \citep{schiff2024} and performed a grid search over \(\lambda_1\in[0,1,10]\) and \(\lambda_2\in[1,10,100,1000]\).

\paragraph{Neighborhood} Our Neighborhood method also has two parameters: \(K\), which is the number of neighbors sampled from each local neighborhood per training step, and \(\lambda\), which is the weight factor for our neighborhood-based regularization term (\cref{eqn: neighborhood loss function}). We performed a grid search over \(K\in[8,16,32,64]\) and \(\lambda\in [1,10,100,1000]\).

\begin{figure}[htp]
    \centering
    \includegraphics[width=1.0\linewidth]{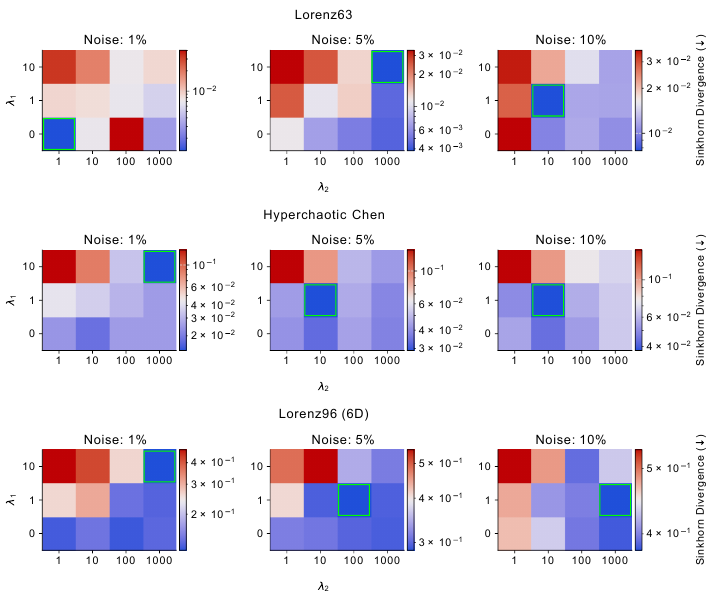}
    \caption{Hyperparameter sweep results for DySLIM. The green box indicates the selected hyperparameter combination.}
    \label{fig: dyslim hyperparameter sweep}
\end{figure}

\begin{figure}[htp]
    \centering
    \includegraphics[width=1.0\linewidth]{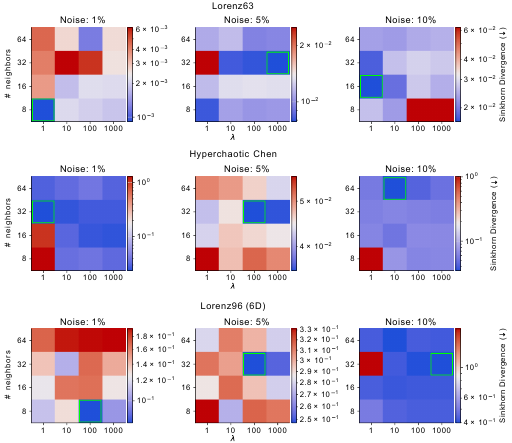}
    \caption{Hyperparameter sweep results for our Neighborhood method. The green box indicates the selected hyperparameter combination.}
    \label{fig: neighborhood hyperparameter sweep}
\end{figure}

\section{Evaluation metrics}
\label{appendix: evaluation metrics}
As our goal is to assess how closely the models recover the ground truth dynamics, even when the training data is corrupted by noise, we perform all evaluations on the noiseless version of the test dataset (e.g., trajectory errors are computed with respect to the clean trajectories and not the noisy ones). All metrics, except for \(\mmd\) is computed using the 50 trajectories in the test dataset.

\subsection{Valid prediction time} To evaluate the short-term accuracy of the model predictions, the first metric we introduce is the normalized root mean squared error between the data and model predictions, which for a trajectory prediction starting at the initial condition \(\boldu_0^i\) is given as,
\begin{equation}
    \text{NRMSE}^i(t_j)=\frac{1}{\sqrt{d}}||\phi_{j\Delta t;\boldtheta}(\boldu_0^i)-\boldu_j^i||_2^2.
\end{equation}
Note that while the original NRMSE requires scaling the error \(\phi_{j\Delta t;\boldtheta}(\boldu_0^i)-\boldu_j^i\) with the standard deviation of the data trajectory before taking the norm, it is not necessary in our case as the data is affine transformed to have unit standard deviation along each dimension beforehand. For plots like \cref{fig: short-term prediction}a, these per trajectory NRMSEs can be averaged to yield the mean NRMSE:
\begin{equation}
    \text{NRMSE}(t_j)=\frac{1}{m}\sum_{i=1}^m\text{NRMSE}^i(t_j).
\end{equation}

The valid prediction time is defined as the maximum time until which the NRMSE value stays below a user-determined threshold \(\epsilon\), normalized by the maximum Lyapunov exponent \(\lambda_1\):
\begin{equation}
    \text{VPT}^i = \lambda_1\cdot \max \{T \in \mathbb{R}\ |\ \text{NRMSE}^i(t) \leq \epsilon,\ \forall t \leq T\}.
\end{equation}
To yield a single representative value for each experiment, these per-trajectory VPT values were also averaged as,
\begin{equation}
    \text{VPT}=\frac{1}{m}\sum_{i=1}^m \text{VPT}^i.
\end{equation}
For our analyses, the VPT threshold was set to \(\epsilon = 0.3\).

\subsection{Maximum mean discrepancy}
The maximum mean discrepancy (MMD) is a metric that quantifies the difference between two measures \(\mu\) and \(\nu\) under mild assumption \cite{gretton2012}. A big advantage of MMD is that it can be efficiently estimated using samples from the measures of interest. For samples \(\{\boldu^k\}_{k=1}^K\sim\mu\) and  \(\{\bm{v}^k\}_{k=1}^K\sim\nu\), the squared MMD can be estimated using the sample as \citep{gretton2012},
\begin{equation}
    \widehat{\text{MMD\textsuperscript{2}}}(\mu,\nu)=\frac{1}{K(K-1)}\sum_{k=1}^K\sum_{k'\neq k}^K\left[ \kappa\left(\boldu^k,\boldu^{k'}\right)+\kappa\left(\bm{v}^k,\bm{v}^{k'}\right)\right]-\frac{2}{K^2}\sum_{k=1}^K\sum_{k'=1}^K \kappa\left(\boldu^k,\bm{v}^{k'}\right)
\end{equation}
where \(\kappa(\cdot,\cdot)\) is a suitable kernel function. For our analyses, we used the mixture of rational quadratic kernels \citep{schiff2024}
\begin{equation}
    \kappa(\boldu,\bm{v})=\sum_{\sigma_q \in \bm{\sigma}}\frac{\sigma_q^2}{\sigma_q^2+||\boldu-\bm{v}||_2^2};\quad \bm{\sigma}=\{0.2, 0.5,0.9,1.3\}.
\end{equation}
In particular, since $\kappa$ is a characteristic kernel, the condition $\text{MMD\textsuperscript{2}}=0$, implies $\mu=\nu$.

For the \(\mmd\) results, we made a separate test dataset of 5000 points on the attractor that are evolved for a longer period in time, as more samples and a longer time integration were needed in order to properly assess the quality of the learned measures. The on-attractor points were generated analogously to the data generation process, and the integration time was taken to be \([0,100\lambda_1^{-1}]\) where the maximum Lyapunov exponent of each dataset was determined beforehand as according to \cref{appendix: Lyapunov exponents}.

The baseline values for \(\mmd\) characterizes the value between two independent samples of the chaotic attractor that are both evolved according to the ground truth equations. This value corresponds to the best possible attainable value, when the model is able to exactly reproduce the long-term physical measure of the ground truth system. This baseline \(\mmd\) value was computed by taking two 5000 point samples on the ground truth attractor, evolving the both with the ground truth dynamics, then computing the \(\mmd\) for each time instant. Subsequently, the \(\mmd\) values for all time points were averaged to yield a single baseline value \cref{fig: mmd baseline}.

\begin{figure}[ht]
    \centering
    \includegraphics[width=1.0\linewidth]{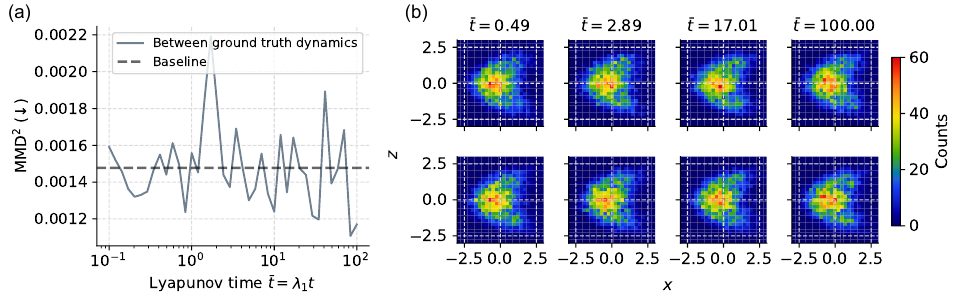}
    \caption{Calculation of baseline \(\mmd\) value for the hyperchaotic Chen system. \textbf{(a)} Time evolution of the \(\mmd\) computed between two samples of the chaotic attractor, both evolved using the ground truth dynamics. Note that unlike \cref{fig: statistical accuracy}b, the values do not start from zero as the initial conditions for the two attractor samples are different. \textbf{(b)} Histograms of the ground truth attractor samples at different rollout times. The two rows correspond to the two attractor samples that are evolved in time.}
    \label{fig: mmd baseline}
\end{figure}

\subsection{Relative vector field / Jacobian error}
To evaluate the accuracy of the learned vector fields, we computed the relative vector field error for each point in the test dataset, as defined below:
\begin{equation}
    \text{Rel. Vector Field Error}(\boldu)=\frac{||f_{\boldtheta}(\boldu)-f(\boldu)||_2}{||f(\boldu)||_2},
\end{equation}
where we used the shifted and scaled version of the ground truth equations (\cref{eqn: shift and scaled ODE}) to supply the values of \(f(\boldu)\).

Similarly, the accuracy of the learned Jacobians at each point was assessed using the relative Jacobian error, defined as:
\begin{equation}
    \text{Rel. Jacobian Error}(\boldu)=\frac{||\partial f_{\boldtheta}(\boldu)-\partial f(\boldu)||_F}{||\partial f(\boldu)||_F},
\end{equation}
where we once again accounted for the dataset normalization for the computation of the ground truth Jacobians.

\subsection{Lyapunov exponents}
\label{appendix: Lyapunov exponents}
The Lyapunov exponents characterize the asymptotic contraction and expansion rates of the chaotic attractor, and are given as the spectrum of the matrix \citep{oseledec1968,christiansen1997,guckenheimer1983}
\begin{equation}
    \Lambda=\lim_{t\rightarrow \infty} \frac{1}{2t}\log(M^\mathsf{T}_{\boldu}(t) M_{\boldu}(t)).
\end{equation}

where \(M_{\boldu}(t)\) is the fundamental matrix solution of the variational equation (\cref{eqn: variational equation}, second line).

While one can attempt to directly solve \cref{eqn: variational equation}, this procedure is numerically unstable as  \(M_{\boldu}(t)\) quickly becomes singular in time. Instead, we use the algorithm of Shimada et al. \citep{shimada1979}, where for each initial condition \(\boldu_0^i;\; i=1,\dots, m\), the algorithm produces an estimate of the Lyapunov spectrum \(\{\lambda_r(\boldu_0^i)\}_{r=1}^d\) by solving the \(d(d+1)\) coupled ODEs corresponding to the original ODE and its variational equation
\begin{equation}
\begin{aligned}
&\displaystyle \frac{d \boldu (t)}{d t} = f(\boldu (t)), \quad &\boldu (0)=\boldu_0\\
&\frac{d \bm{M}_{\boldu}(t)}{dt} = Df|_{\boldu}\bm{M}_{\boldu}(t), \quad &\bm{M}_{\boldu}(0)=\mathbb{I}_d
\end{aligned}
\label{eqn: variational equation}
\end{equation}
over the time interval \([0, T]\) and performing  Gram-Schmidt orthonormalization at fixed time intervals \(\Delta t\) to keep the fundamental matrix \(\bm{M}_{\boldu}(t)\) nonsingular. \cref{eqn: variational equation} was solved using the same solver settings as data generation: \texttt{Tsit5} solver from the \texttt{diffrax} \citep{kidger2022} library, with relative and absolute tolerance values of 10\textsuperscript{-8} and 10\textsuperscript{-10}, respectively.

To compute the mean absolute error of the Lyapunov spectrum, we first produced a mean Lyapunov spectrum estimate by averaging over the results from different initial conditions:
\begin{equation}
    \bar{\lambda}_r = \frac{1}{m}\sum_{i=1}^m \lambda_r(\boldu_0^i).
    \label{eqn: mean lyapunov}
\end{equation}
Denoting \(\bar{\lambda}_{r;\boldtheta}\) as the mean Lyapunov exponents computed using the trained models, mean absolute Lyapunov error was then computed as,
\begin{equation}
    \text{Mean Abs. Lyapunov Error}=\frac{1}{d}\sum_{r=1}^d\left|\bar{\lambda}_{r;\boldtheta}-\bar{\lambda}_r\right|.
\end{equation}

\section{Proof of Theorem 1}
\label{appendix: theorem proof}

\begin{theorem*}
Let $(X,\mathcal{C})$ be a measurable space, and let $\nu_1,\nu_2$ be finite measures on $(X,\mathcal{C})$ such that $ \nu_1(X\setminus \mathcal{A})=\nu_2(X\setminus \mathcal{A})=0 $
for a set $\mathcal{A}\in\mathcal{C}$ (both measures have support on $\mathcal{A}$). 

Furthermore, let $\{B_j\}_{j\in\mathbb{N}}\subset\mathcal{C}$ be a countable cover of $\mathcal{A}$, and assume that $\nu_1|_{B_j}=\nu_2|_{B_j}$, $\forall j$, that is
\[
\nu_1(C)=\nu_2(C)\quad\text{for all measurable }C\subset B_j,\;\forall j.
\]
Then $\nu_1=\nu_2$ on $\mathcal{C}$.
\end{theorem*}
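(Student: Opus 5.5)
The plan is to disjointify the cover and then use countable additivity. Define
\[
D_1 = B_1,\qquad D_j = B_j\setminus \bigcup_{l<j} B_l \quad (j\ge 2).
\]
Each $D_j$ lies in $\mathcal{C}$ because $\sigma$-algebras are closed under finite unions and differences. The sets $D_j$ are pairwise disjoint, $D_j\subset B_j$, and $\bigcup_j D_j=\bigcup_j B_j\supset\mathcal{A}$.

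Now fix an arbitrary $C\in\mathcal{C}$ and split it as
\[
C=(C\setminus\mathcal{A})\,\cup\,\bigcup_{j\in\mathbb{N}} (C\cap\mathcal{A}\cap D_j).
\]
This is a countable disjoint union of measurable sets; it covers all of $C$ because the $D_j$ exhaust $\mathcal{A}$.

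For the first piece, monotonicity and the support hypothesis give $\nu_i(C\setminus\mathcal{A})\le\nu_i(X\setminus\mathcal{A})=0$ for $i=1,2$. Each remaining piece $C\cap\mathcal{A}\cap D_j$ is a measurable subset of $B_j$, so the hypothesis $\nu_1|_{B_j}=\nu_2|_{B_j}$ gives equal measures for the two.

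Countable additivity then yields
\[
\nu_1(C)=0+\sum_j\nu_1(C\cap\mathcal{A}\cap D_j)=\sum_j\nu_2(C\cap\mathcal{A}\cap D_j)+0=\nu_2(C).
\]
Both sums have nonnegative terms, so they are equal termwise and no convergence issue arises. Finiteness of the measures is not even needed.

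The only point needing care, and the closest thing to an obstacle, is the disjointification step. The $B_j$ may overlap, so summing $\nu_i(C\cap B_j)$ directly would overcount. Passing to the $D_j$ fixes this while keeping each piece inside a single $B_j$, which is exactly where the restriction hypothesis applies. The rest is routine.
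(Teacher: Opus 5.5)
Your proof is correct and follows essentially the same route as the paper: you disjointify the cover via $D_j = B_j\setminus\bigcup_{l<j}B_l$ so that each piece $C\cap\mathcal{A}\cap D_j$ lies inside a single $B_j$, then conclude by countable additivity. The only differences are cosmetic: you split off $C\setminus\mathcal{A}$ explicitly instead of first reducing to $C\cap\mathcal{A}$, and your remark that finiteness of $\nu_1,\nu_2$ is not needed is also correct.
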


\begin{proof}
Let $A\in\mathcal{C}$. Since both measures vanish outside $\mathcal{A}$,
\[
\nu_k(A)=\nu_k(A\cap \mathcal{A}),\quad k=1,2.
\]
Thus it suffices to compare $\nu_1$ and $\nu_2$ on $A\cap \mathcal{A}$. Define
\[
C_1 := A\cap \mathcal{A}\cap B_1,\qquad
C_n := A\cap \mathcal{A}\cap B_n \setminus \bigcup_{k=1}^{n-1} B_k,\quad n\ge2.
\]
Then $\{C_n\}$ are pairwise disjoint, $C_n\subset B_n$, and since the $B_j$ are a cover we have that
\[
A\cap \mathcal{A} = \bigcup_{n=1}^\infty C_n.
\]
By the assumption, $\nu_1(C_n)=\nu_2(C_n), \forall n.$ Using countable additivity, 
\begin{equation*}
    \nu_1(A)=\sum_{n=1}^\infty \nu_1(C_n)
=\sum_{n=1}^\infty \nu_2(C_n)
=\nu_2(A).
\end{equation*} Since $A$ was arbitrary, $\nu_1=\nu_2$.
\end{proof}

\end{document}